\def\BibTeX{{\rm B\kern-.05em{\sc i\kern-.025em b}\kern-.08em
    T\kern-.1667em\lower.7ex\hbox{E}\kern-.125emX}}
\newcommand{\VS}{\vspace{1ex}}
\providecommand{\cond}{\,\vert\,}
\providecommand{\cond}{\,\vert\,}                               
\DeclarePairedDelimiterX{\scalp}[2]{\langle}{\rangle}{#1,#2}    
\DeclarePairedDelimiterX{\infdivx}[2]{(}{)}{%
  #1\;\delimsize\|\;#2}
\newcommand{\eg}{e.g\xperiod}
\newcommand{\ie}{i.e\xperiod}
\newcommand\etc{etc\xperiod}
\newtheorem{theorem}{Theorem}
\begin{document}

\title{Game-theoretic distributed learning of generative models for heterogeneous data collections \\
\thanks{
D.S. was supported by the German Federal Ministry for Economic Affairs and Climate Action (BMWK) project 01MN23021A. B.F. is grateful for support by the CTU institutional support (future fund). The authors would like to thank the Center for Information Services and HPC (ZIH) at TU Dresden for providing computing resources.

The manuscript is accepted for publishing at the 2025 Symposium on Federated Learning and Intelligent Computing Systems (FLICS 2025).

\copyright~2025 IEEE. Personal use of this material is permitted. Permission from IEEE must be obtained for all other uses, in any current or future media, including reprinting/republishing this material for advertising or promotional purposes, creating new collective works, for resale or redistribution to servers or lists, or reuse of any copyrighted component of this work in other works.
	}
}

\author{
	\IEEEauthorblockN{Dmitrij Schlesinger\IEEEauthorrefmark{1}, Boris Flach\IEEEauthorrefmark{2}}
\IEEEauthorblockA{\IEEEauthorrefmark{1}\textit{Faculty of Computer Science},
\textit{Dresden University of Technology}, 
Dresden, Germany \\
Email: dmytro.shlezinger@tu-dresden.de}
\IEEEauthorblockA{\IEEEauthorrefmark{2}\textit{Department of Cybernetics}, 
	\textit{Czech Technical University in Prague}, 
	Prague, Czech Republic \\
	Email: flachbor@fel.cvut.cz}
}

\maketitle

\begin{abstract} 
One of the main challenges in distributed learning arises from the difficulty of handling heterogeneous local models and data. In light of the recent success of generative models, we propose to meet this challenge by building on the idea of exchanging synthetic data instead of sharing model parameters. Local models can then be treated as ``black boxes'' with the ability to learn their parameters from data and to generate data according to these parameters. Moreover, if the local models admit semi-supervised learning, we can extend the approach by enabling local models on different probability spaces. This allows to handle heterogeneous data with different modalities. We formulate the learning of the local models as a cooperative game starting from the principles of game theory. We prove the existence of a unique Nash equilibrium for exponential family local models and show that the proposed learning approach converges to this equilibrium. We demonstrate the advantages of our approach on standard benchmark vision datasets for image classification and conditional generation.
\end{abstract}

\begin{IEEEkeywords}
Distributed Learning, Game Theory, Generative Models, Multimodality.
\end{IEEEkeywords}

\section{Introduction}

The perhaps most popular branch of distributed learning is Federated Learning (FL) \cite{pmlr-v54-mcmahan17a}. It assumes several {\em local models}, each one having access to its own private data. The goal is to learn them jointly without sharing the local data. Originally, \cite{pmlr-v54-mcmahan17a} proposed to learn a global model, assuming that the local models are just replicas of the global one. The global model is then learned by averaging the parameters of the local models after each local update step.

The global nature of the learned model is, however, a severe restriction, which usually contradicts practical requirements, \eg if different local models are deployed on different hardware. Besides, it is often desired that the local models specialize on their own data but to some extent keep being generalizable to all data. That is why later works  relaxed this restriction by allowing different models, which was referred to as heterogeneous FL or, in other works, as personalized FL \cite{DBLP:conf/nips/0001MO20}.

The majority of works dealing with model heterogeneity employ parameter sharing. The general idea is to use models which can differ to some extent but still share some common elements or properties. For example, \cite{DBLP:conf/iclr/Diao0T21} proposes a system of local models with overlapping parameter sets, so that the parameters shared by different local models can be averaged. The authors of \cite{10.1145/3534678.3539384} propose a compositional architecture, where the local models are composed of shared ``building blocks''. In \cite{DBLP:conf/icml/ChenZ24} this is further relaxed by allowing the blocks to be different to some extent. Some works, \eg \cite{DBLP:conf/icml/ChenZ24,pmlr-v139-shamsian21a}, employ so-called hyper networks which orchestrate the updates of the local parameters, etc.

We note that all methods based on parameter sharing suffer from a fundamental limitation -- some local model parts or aspects must remain accessible from outside. Hence, such local models cannot be considered as black boxes, because their design is restricted by the requirement to allow parameter sharing.

Another line of work involves synthetic data generation for sharing the statistical properties of the learned local models and/or local data. We mention especially \cite{DBLP:conf/icml/HuangSZL24}, which is perhaps the closest one to ours (further references can be found therein). Here, each local model consists of two parts: (i) a model for its actual task (e.g.~classification), and (ii) a separate method for synthetic data generation. The generated synthetic data are used by other local models along with their own private data in order to learn their actual tasks. Although \cite{DBLP:conf/icml/HuangSZL24} is based on the same general idea as our approach (i.e.~synthetic data sharing), it has the following weaknesses. First of all, the generative model part is designed in a rather ad-hoc manner by using a general non-learnable distribution matching method for data synthesis. It accounts neither for the actual task specific model nor for the synthetic data from the other local models. Second, although the local models exchange only synthetic data, and can be in principle completely different, they are designed to treat the own and synthetic data in different ways. In particular, the loss terms for the synthetic data are defined on an intrinsic feature space.
This considerably reduces the choice for local models to those which actually compute these features.
Finally, the work focuses solely on classification problems and fully supervised learning. Other tasks as well as many practically relevant questions, like \eg semi-supervised learning, multi-modal models, \etc, are not discussed. Our proposed approach addresses and overcomes these weaknesses. 

One of the emergent research directions in FL is multimodal federated learning (see e.g.~\cite{s23156986} for an overview). It aims at learning multimodal data with local models defined on different modal combinations. On the other hand, multimodal and multitask learning is a strong feature of recent generative models like e.g.~(hierarchical) VAEs \cite{Vahdat2020NeurIPS,DBLP:conf/iclr/SutterDV21}, diffusion models \cite{Rombach:CVPR2022}, and foundation models \cite{Radford:Arxiv2021}, to name a few. Our approach establishes a connection between these lines of work. 

To summarize, our contributions are as follows:
\begin{itemize}
    \item We formulate distributed learning in terms of game theory. We prove the existence of a unique Nash equilibrium and convergence of the proposed algorithm to it in the case of exponential family models.
    \item We consider the most general case of model heterogeneity, assuming that the local models are ``black boxes'' that do not need to know anything about each other, except for observing data generated by the others. The local models should be able (i) to optimize their parameters on training data, and (ii) to generate samples from the learned model.
    \item In the case that the local models permit semi-supervised learning, they even do not need to operate on the same probability spaces. This provides a straightforward extension to multimodal data models and seamless data integration.
\end{itemize}

The rest of the paper is organized as follows. The next section introduces a game-theoretic view on distributed learning. It completely resolves the problem of model heterogeneity. For the sake of simplicity, we focus on the case of fully supervised local learning. The following section extends the setup by allowing local models defined on different modalities. For this, we assume that the local models permit semi-supervised learning. The experimental section illustrates the proposed approach on standard benchmark vision datasets for image classification and conditional generation. Finally, the last section discusses further properties, limitations, and extensions of the proposed approach.

\section{Game-theoretic view}\label{sec:game}
To present the core idea of the proposed approach, we consider the case of just one random variable $x$ for the sake of simplicity. Assume we have $N$ local models, i.e.~$N$ parametrized probability distributions $p_{\theta_i}(x)$, $i = 1\ldots N$, where $\theta_i$ represent their parameters (e.g.~network weights). Let $\pi_i(x)$ denote the local training sets (i.e.~empirical target probability distributions for each local model). If each local model has access to all data, the models can be learned by the maximum likelihood principle
\begin{subequations}
\begin{eqnarray}\label{eq:ml1}
    & & L(\theta_i) = \sum_j \alpha_{ij} \sum_x \pi_j(x) \log p_{\theta_i}(x) \rightarrow\max_{\theta_i}, \\
    & & \alpha_{ij}\geq 0, \sum_j \alpha_{ij} = 1 \ \forall i \label{eq:ml1b} ,
\end{eqnarray}
\end{subequations}
where $\alpha_{ij}$ are weights, which assign the ``importance'' of the $j$th training set for the $i$th model.

However, similar to Federated Learning, we assume that each local model can sample only from its own training set. To bypass this restriction without violating it, we assume {\em generative} local models. This means that we can sample realizations $x$ from each $p_{\theta_i}(x)$. The core idea is then to substitute the hidden real data in \eqref{eq:ml1} by synthetic ones, which leads to 
\begin{eqnarray}\label{eq:utility}
    \lefteqn{
    L(\theta_i) = \alpha_{ii} \sum_x \pi_i(x) \log p_{\theta_i}(x) + \nonumber} \\
    & & \sum_{j\neq i} \alpha_{ij} \sum_x p_{\theta_j}(x) \log p_{\theta_i}(x) \rightarrow\max_{\theta_i} \ \ \ \forall i.
\end{eqnarray}
Strictly speaking, the objective $L(\theta_i)$ depends also on all $\theta_j$, $j \neq i$ through samples generated by the other local models. To resolve this, we follow a game-theoretic approach, i.e.~we view \eqref{eq:utility} as player {\em utilities} and optimize them w.r.t.~their own parameters $\theta_i$ only. The parameters $\theta_i$ can then be understood as player {\em strategies} in a {\em cooperative game}.

Next, we discuss the properties of the proposed cooperative game and suggest a simple algorithm for finding its Nash equilibrium. We begin by assuming that the model families $p_{\theta_i}$ have enough expressive power to model the data distributions $\pi_i$ and their mixtures. The optimum of \eqref{eq:utility} w.r.t.~$\theta_i$ is then achieved at
\begin{equation}\label{eq:nash}
    p_{\theta_i}(x) = \alpha_{ii} \pi_i(x) + \sum_{j\neq i} 
    \alpha_{ij} p_{\theta_j}(x) \ \ \forall i ,
\end{equation}
i.e.~when $p_{\theta_i}$ exactly matches the data it is trained on. It follows that the Nash equilibrium (where no player can improve its utility) is the solution of a system of linear equations
\begin{equation}\label{eq:lineq}
    p = \pi A + p B ,
\end{equation}
where $p$ is the vector of all probability distributions $p_{\theta_i}$, $\pi$ is the vector of all training sets $\pi_i$, $A$ is a diagonal matrix with diagonal elements $\alpha_{ii}$, and $B$ is a matrix containing $\alpha_{i\neq j}$ with zero diagonal. We assume that all $\alpha_{ii}$ are strictly positive, because setting an $\alpha_{ii} = 0$ would mean that the $i$th local model ignores its own real data $\pi_i$. This ensures that \eqref{eq:lineq} has a unique solution given by
\begin{equation}\label{eq:solution}
    p = \pi A (\mathbb I - B)^{-1} .
\end{equation}
This solution can also be approached by the following iteration 
\begin{equation}\label{eq:iter}
    p^{(t+1)} = \pi A + p^{(t)} B .
\end{equation}
To prove the existence of the inverse $(\mathbb I - B)^{-1}$ and the convergence of the iterative procedure \eqref{eq:iter}, we employ Gershgorin Circle Theorem, which ensures that all 
eigenvalues $\lambda_k$ of the matrix $B$ are bounded by $|\lambda_k| \leq \max_i (1 - \alpha_{ii})$. It follows that its spectral radius fulfills $\rho(B) = \max_k |\lambda_k| < 1$. Both the uniqueness of the solution of \eqref{eq:solution} and the convergence of \eqref{eq:iter} to it are then a direct consequence.

Equation \eqref{eq:solution} shows that each local model will learn a mixture of the data distributions $\pi_i(x)$. These mixtures are different for each local model because the matrix $(\mathbb I - B)^{-1}$ has rank $N$. It is important to notice that the choice of the $\alpha$-matrix is crucial for {\em both} the resulting local models and the convergence speed. If it is diagonal, \ie $A=\mathbb{I}$, $B=0$, then each local model learns only from its own data and, at the same time, the iteration \eqref{eq:iter} stops after the first step. If, on the other hand, the diagonal elements of the weight matrix are small, then all learned local models can approach similar mixtures (depending on a proper choice of $B$). At the same time, the iteration \eqref{eq:iter} takes long to converge because the spectral radius of the matrix $B$ approaches one in this case.

To move our analysis towards practical cases, we assume that each local model is from an exponential family given by
\begin{equation} \label{eq:exponential}
 p_{\theta_i}(x) = \exp \bigl[\scalp{\phi_i(x)}{\theta_i} - A(\theta_i)\bigr] ,
\end{equation}
where $\phi_i(x)$ is the sufficient statistic, $\theta_i$ is the natural parameter, and $A(\theta_i)$ is the cumulant function. This assumption is fairly general and covers a wide range of practical cases. We prove that, in this case too, the game defined by \eqref{eq:utility} has a unique, asymptotically stable Nash equilibrium.

\begin{theorem}\label{T1}
The cooperative game given by the utilities
\begin{equation*}
    L_i(\theta) = \alpha_{ii} \sum_x \pi_i(x) \log p_{\theta_i}(x) +
    \sum_{j\neq i} \alpha_{ij} \sum_x p_{\theta_j}(x) \log p_{\theta_i}(x) 
\end{equation*}
and strategies given by exponential family distributions \eqref{eq:exponential} 
has a unique, asymptotically stable Nash equilibrium.
\end{theorem}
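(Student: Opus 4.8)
\emph{Proof sketch.} The plan is to use the exponential-family structure to collapse the game to the linear fixed-point problem already solved around \eqref{eq:lineq}--\eqref{eq:iter}, and then to carry existence, uniqueness and stability back to the natural parameters via Legendre duality.

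Substituting \eqref{eq:exponential} into $L_i$ and using $\sum_j\alpha_{ij}=1$ yields $L_i(\theta)=\scalp{m_i(\theta_{-i})}{\theta_i}-A(\theta_i)$ with the effective moment target $m_i(\theta_{-i})=\alpha_{ii}\,\mathbb{E}_{\pi_i}[\phi_i]+\sum_{j\neq i}\alpha_{ij}\,\mathbb{E}_{p_{\theta_j}}[\phi_i]$. For a minimal exponential family $\nabla^2_{\theta_i}A(\theta_i)=\mathrm{Cov}_{p_{\theta_i}}(\phi_i)\succ 0$, so each $L_i$ is strictly concave in its own variable and player $i$ has the unique best response characterised by the moment-matching condition $\mathbb{E}_{p_{\theta_i}}[\phi_i]=m_i(\theta_{-i})$. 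A Nash equilibrium is therefore exactly a solution of the system $\mathbb{E}_{p_{\theta_i}}[\phi_i]=\alpha_{ii}\,\mathbb{E}_{\pi_i}[\phi_i]+\sum_{j\neq i}\alpha_{ij}\,\mathbb{E}_{p_{\theta_j}}[\phi_i]$ for all $i$ -- a weaker, moment-level version of the full distributional matching \eqref{eq:nash}.

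I would then pass to the mean parameters $\mu_i=\mathbb{E}_{p_{\theta_i}}[\phi_i]$, which are in a smooth bijection with the $\theta_i$ via the Legendre transform of the cumulant function. In these coordinates the best-response update of player $i$ is just $\mu_i\mapsto m_i(\theta_{-i})$, and when the sufficient statistics are shared ($\mathbb{E}_{p_{\theta_j}}[\phi_i]=\mu_j$) the equilibrium system becomes $\mu=\bar\phi A+\mu B$, with $\bar\phi$ the vector of empirical moments and $A,B$ the matrices of \eqref{eq:lineq}. The Gershgorin bound $\rho(B)\le\max_i(1-\alpha_{ii})<1$ obtained above then applies verbatim: the unique solution is $\mu^\star=\bar\phi A(\mathbb I-B)^{-1}$ and the iteration $\mu^{(t+1)}=\bar\phi A+\mu^{(t)}B$ converges to it geometrically. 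Since the coefficients of $\mu^\star$, seen as a combination of the empirical moments, are nonnegative (Neumann series $\sum_k AB^k$) and sum to one, with the self-coefficient at least $\alpha_{ii}>0$, $\mu^\star$ lies in the interior of the mean-parameter domain and corresponds to a genuine $\theta^\star$. Carrying everything back through the $\mu\leftrightarrow\theta$ diffeomorphism gives a unique Nash equilibrium $\theta^\star$ which is the globally attracting fixed point of the learning dynamics, hence asymptotically stable.

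The crux is the general heterogeneous case in which the $\phi_i$ differ, so that $\mathbb{E}_{p_{\theta_j}}[\phi_i]$ depends nonlinearly on $\mu_j$: the mean-coordinate best-response map is no longer affine, its Jacobian having identity diagonal blocks and off-diagonal blocks $\alpha_{ij}\,\mathrm{Cov}_{p_{\theta_j}}(\phi_i,\phi_j)\,[\mathrm{Cov}_{p_{\theta_j}}(\phi_j)]^{-1}$. One must show this map is still a contraction -- equivalently, that the off-diagonal block operator has spectral radius below one -- which is where the weights must carry the load: each block is a conditional regression matrix whose relevant singular values are canonical correlations $\le 1$, and combining this with $\max_i(1-\alpha_{ii})<1$ through a block Gershgorin estimate in a suitably whitened inner product should close the argument. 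Once the contraction is established, existence, uniqueness and asymptotic stability follow as in the shared-statistics case via the Banach fixed-point theorem and the $\theta\leftrightarrow\mu$ correspondence. I expect this block-contraction bound to be the main obstacle; the reduction to mean parameters and the Legendre-duality bookkeeping are routine.
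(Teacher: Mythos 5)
Your route is genuinely different from the paper's, and in the homogeneous case it works: reducing each player's problem to moment matching via strict concavity of $A$, passing to mean parameters, and reusing the Gershgorin/Neumann-series argument from \eqref{eq:lineq} is a clean, self-contained derivation of existence, uniqueness and geometric convergence of best-response iteration when all players share the same sufficient statistic. But the theorem is stated for families \eqref{eq:exponential} with player-specific statistics $\phi_i$, and there your argument has a genuine gap that you yourself flag: the block-contraction bound is asserted, not proved. Concretely, the off-diagonal Jacobian block $\alpha_{ij}\,\mathrm{Cov}_{p_{\theta_j}}(\phi_i,\phi_j)\,[\mathrm{Cov}_{p_{\theta_j}}(\phi_j)]^{-1}$ has canonical-correlation singular values only after whitening on the left by $\mathrm{Cov}_{p_{\theta_j}}(\phi_i)^{-1/2}$ --- a covariance taken under $p_{\theta_j}$, not under $p_{\theta_i}$. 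The blocks in row $i$ are therefore contractive in mutually incompatible inner products (one per opponent $j$), so the row sums $\sum_{j\neq i}\alpha_{ij}$ cannot simply be combined into a single operator-norm bound via block Gershgorin; moreover the whitening matrices move with the state, so even a pointwise spectral bound on the Jacobian would not immediately yield a global Banach contraction. As written, the heterogeneous case --- which you correctly identify as the crux --- is open.

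It is worth contrasting this with how the paper closes exactly that case. Rather than analysing the best-response map, it verifies Rosen's diagonal strict concavity condition by dualising each player's problem: maximising $L_i$ over $\theta_i$ becomes entropy maximisation over $p_i$ subject to the linear moment constraint $\sum_x p_i(x)\phi_i(x)=\sum_x[\alpha_{ii}\pi_i(x)+\sum_{j\neq i}\alpha_{ij}p_j(x)]\phi_i(x)$. All coupling between players migrates into these linear constraints, the utilities become functions of the own strategy only, and the symmetrised pseudo-gradient Jacobian is block-diagonal with entries $1/p_i(x)$ --- positive definite with no condition on the $\phi_i$ whatsoever. Rosen's Theorems 3, 4 and 9 then give uniqueness and stability, and Theorems 7--10 give convergence of the gradient dynamics (note also that your notion of stability is convergence of best-response iteration, whereas the paper's is stability of the gradient flow). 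If you want to salvage your approach for heterogeneous statistics, you would either need to prove the block-contraction estimate in a single fixed norm, or borrow the paper's trick of moving the cross-player dependence into linear constraints where it can no longer spoil definiteness.
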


\begin{proof}

Our proof relies on the classic result of \cite{Rosen:Econ1965}, which shows that games satisfying {\em diagonal strict concavity} (DSC), a condition stronger than concavity, have unique Nash equilibria. 

We start by noticing that the cumulant function of an exponential family is convex in its natural parameters. It follows that the game utilities are concave in their own strategies. A sufficient condition for the stronger DSC criterion is that the symmetrised Jacobian of the mapping
 \begin{equation}
  \begin{bmatrix}
   \vdots \\
   \theta_i \\
   \vdots
  \end{bmatrix}
  \mapsto
  \begin{bmatrix}
   \vdots \\
   \nabla_{\theta_i} L_i(\theta) \\
   \vdots
  \end{bmatrix}
\end{equation}
is negative definite. The most convenient way to prove this condition is to ``dualise'' the game. The dual task for maximising $L_i(\theta)$ w.r.t.~$\theta_i$ for exponential families reads
\begin{align}
 & \sum_{x} 
 p_i(x) \log p_i(x) 
 \rightarrow \min_{p_i\geqslant 0} \\
 & \text{s.t.} 
  \begin{cases}
  \sum_x p_i(x) \phi_i(x) = 
  \sum_x \bigl[\alpha_{ii}\pi_i(x) + 
  \sum_{j\neq i} \alpha_{ij} p_j(x) \bigr] \phi_i(x) \\
  \sum_{x} p(x) = 1 .
  \end{cases}
\end{align}
Therefore, we obtain the following ``dual''  game. The strategy of the player $i$ is the distribution $p_i(x)$ and the utility is its entropy $H(p_i)$. This means that the utility of the players depends on their respective strategy only. The game has additional linear constraints, where we assume the existence of an interior feasible point $(p_1,\ldots,p_N)$.

The assertion of the theorem follows from Theorems 3,4,9 in \cite{Rosen:Econ1965}, if we prove that the symmetrised Jacobian of the mapping
 \begin{equation}
  \begin{bmatrix}
   \vdots \\
   p_i \\
   \vdots
  \end{bmatrix}
  \mapsto
  \begin{bmatrix}
   \vdots \\
   - \nabla_{p_i} H(p_i) \\
   \vdots
  \end{bmatrix}
\end{equation}
is positive definite. This is trivial since the Jacobian is diagonal with elements $1/p_i(x)$ in the $i$th diagonal block. 
\end{proof}

Theorems 7-10 in \cite{Rosen:Econ1965} imply then that the simple stochastic gradient ascent algorithm \ref{alg:alg} converges to the unique equilibrium point. Further properties, limitations, and possible extensions of the proposed approach are discussed in Section~\ref{sec:ext}.

\begin{algorithm}
    \caption{\label{alg:alg}Game-theoretic distributed learning.}
    \VS
    \begin{minipage}{0.45\textwidth}

        \VS
        Iterate:
        \begin{enumerate}
            \item Choose a local model $p_{\theta_i}$.
            \item Collect the models $p_{\theta_j}$, $j\neq i$ \\
            (\eg transfer them to the local node).
            \item Repeat several times:
            \begin{enumerate}
                \item Sample the ``local training set'' $\pi'$ from own data $\pi_i$ and models $p_{\theta_i}$ so that proportions of the sampled data correspond to $\alpha_{ij}$:
                \begin{equation}
                    \pi' \sim \alpha_{ii} \pi_i + \sum_{j\neq i} \alpha_{ij} p_{\theta_j} ,
                \end{equation}
                \item Perform a gradient update step with
                \begin{equation}
                    \nabla_{\theta_i} L(\theta_i) = \nabla_{\theta_i}\sum_{x\in\pi'} \log p_{\theta_i}(x) .
                \end{equation}
            \end{enumerate}
        \end{enumerate}
    \end{minipage}
\end{algorithm}

\section{Heterogeneous data collections}
In the previous section, we considered tasks in which all local models are defined on the same set of random variables. Next, we show how to extend the proposed approach to tasks in which each local model's goal is to learn on a subset of random variables. Let $x = (x^1,x^2\ldots x^m)$ be the full collection of random variables. Let $x_i\subset x$ denote the subset of variables the $i$th local model operates with. The $i$th local model's goal is to learn $p_{\theta_i}(x_i)$ for this subset of variables $x_i$. As before, we assume generative models. Let us denote by $x_{ij} = x_i \cap x_j$ the collection of variables which are contained in both $x_i$ and $x_j$. Correspondingly, the utilities \eqref{eq:utility} are modified to
\begin{eqnarray}\label{eq:semisup}
    \lefteqn{L(\theta_i) = \alpha_{ii} \sum_{x_i} \pi_i(x_i) \log p_{\theta_i}(x_i) + \nonumber} \\
    & & \sum_{j\neq i} \alpha_{ij} \sum_{x_j} p_{\theta_j}(x_j) \log p_{\theta_i}(x_{ij}) \rightarrow\max_{\theta_i} \ \ \ \forall i.
\end{eqnarray}
In other words, in addition to its own real data $\pi_i(x_i)$, each local model (a) observes examples of $x_j$ generated by other local models, (b) ignores those components which are not contained in its own set of random variables $x_i$, and (c) optimizes the {\em marginal} likelihood for those variables which are contained in $x_i$. This requires semi-supervised learning, which is understood as the ability of the model to optimize the marginal likelihood $\log p_{\theta_i}(x'_i\subset x_i)$, i.e.~when only a subset of variables is observed.

\section{Experiments}

\subsection{MNIST}\label{sec:mnist}

Our first experiment aims to illustrate the basic approach \eqref{eq:utility} in the situation when the local datasets (empirical distributions) are considerably different. We use the MNIST dataset for this experiment. We learn probability distributions $p(x)$, where $x\in\{0,1\}^{28\times 28}$ are binarized MNIST images. For modeling $p(x)$ we employ a hierarchical variational auto-encoder (HVAE) with two groups of binary latent variables $z_0\in \{0,1\}^{30}$ and $z_1\in \{0,1\}^{100}$. Hence, the complete decoder model reads $p(z_0, z_1, x) = p(z_0) p(z_1\cond z_0) p(x\cond z_0, z_1)$, where $p(z_0)$ is uniform, and the other terms are implemented as feed-forward networks. The corresponding encoder is $q(z_0, z_1, x) = q(x) q(z_1\cond x) q(z_0\cond z_1,x)$ (i.e.~in so-called reverse factorization order), where $q(x)$ is the data distribution. This HVAE is trained by Symmetric Equilibrium Learning \cite{pmlr-v238-flach24a}\footnote{We adapted the provided code for our needs. The details can be found in the source code \cite{flicsgithub}.}.

In order to validate the proposed approach, we split the training data into two parts: $\pi_{0\ldots 4}$ and $\pi_{5\ldots 9}$, which contain images of digits 0 to 4 and 5 to 9, respectively. Correspondingly, we define two local models, each one observing real data from either the first or second training set. The update iterations for each local model are performed on data mini-batches, which consist of an $\alpha$-fraction with real examples $x$ drawn from the corresponding ``own'' $\pi$, and the remaining $(1-\alpha)$-fraction with complete synthetic examples $(z_0, z_1, x)$ drawn from the decoder of the other model. For the unsupervised examples $x$ from the own data we employ Symmetric Equilibrium Learning. In contrast, when learning on complete synthetic data, we can directly optimise the conditional likelihoods of the encoder and decoder. In addition, we train a single HVAE of the same architecture on the whole MNIST training set as a baseline.



\begin{figure}[ht]
    \begin{center}
        \includegraphics[width=\linewidth]{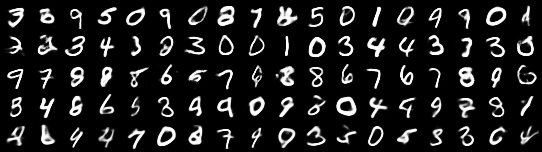}

        \VS\VS
        \includegraphics[width=\linewidth]{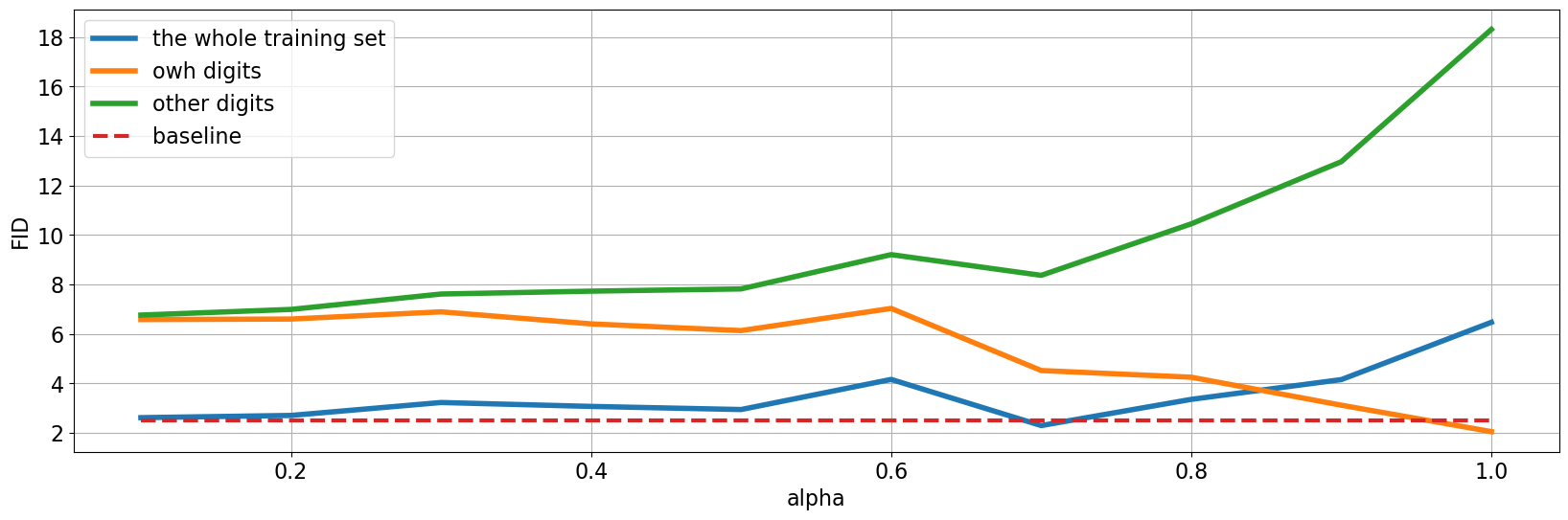}
    \end{center}
    \caption{\label{fig:mnist}MNIST experiment. Top: images generated by the learned models (we show probabilities $p(x\cond z_0,z_1)$ instead of sampled binary images for better visibility), bottom: dependencies of the obtained FID scores on $\alpha$ (see text for explanation).}
\end{figure}

We perform a series of experiments varying $\alpha$ from $0.1$ to $1$. Remember that $\alpha = 1$ means that models are learned on their own real data only, i.e.~independently from each other without any synthetic data. The results are presented in Fig.~\ref{fig:mnist}. The top panel shows images generated from the learned models. The first row corresponds to the baseline HVAE, the next two rows correspond to the pair of HVAEs learned with $\alpha = 1$, and the last two rows show images generated from HVAEs learned with $\alpha = 0.1$. Note that for $\alpha = 1$ only digits from their own datasets are generated, whereas for $\alpha = 0.1$ both local models can generate all digits, although only a subset is presented in their own datasets.

In order to rate models quantitatively, we employ the Fr\'echet Inception Distance (FID) and use the code from \cite{Seitzer2020FID}. In particular, we compute (i) FID scores to the whole training data, (ii) FID scores to the own datasets, and (iii) FID scores to the other datasets (all averaged over both local models). The dependencies of these FID-values on $\alpha$ are shown in the bottom panel of Fig.~\ref{fig:mnist}. Note that the FID scores to the own datasets improve with growing $\alpha$, whereas the FID scores to the other datasets get considerably worse. This is caused by the difference between the target distributions $\pi_{0\ldots 4}$ and $\pi_{5\ldots 9}$. The FID scores to the whole dataset are better (up to stochastic deviations) for smaller $\alpha$ values. This validates the dependence of the results on $\alpha$ discussed in Sec.~\ref{sec:game}.

\subsection{Fashion MNIST}

The next experiment shows that our method can cope with model heterogeneity. We chose the Fashion MNIST dataset for this purpose. We define four local models of different complexity. They have the same layer structure but different numbers of hidden units in their corresponding networks. Each local model has access to its own subset of the training data. Depending on the experiment, a local model learns either a classifier $p(c \cond x)$, where $x$ are images and $c$ are classes, or a joint model $p(x,c)$. In the latter case, the required classifier is a part of the joint model.

\begin{figure*}[ht]
    \begin{center}
    \includegraphics[width=0.8\textwidth]{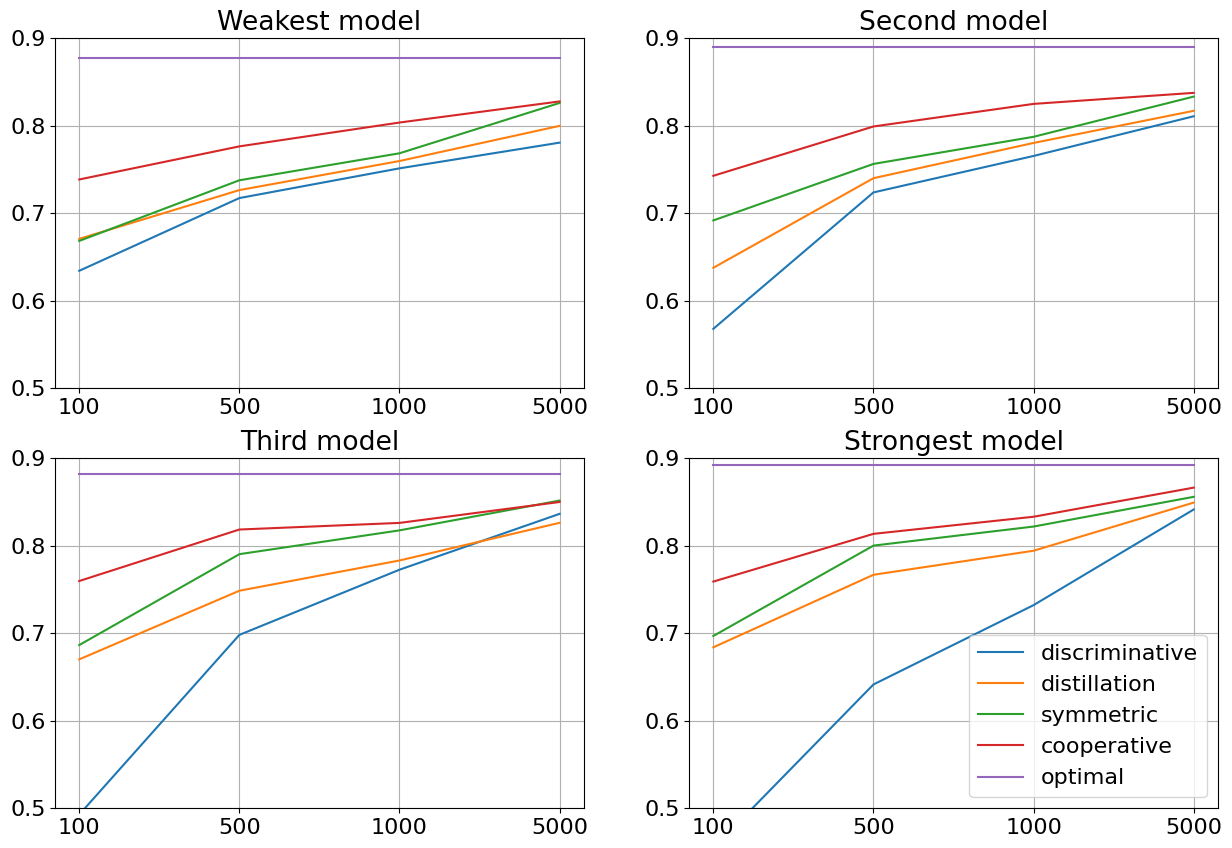}
    \end{center}
    \caption{\label{fig:fmnist}Fashion MNIST experiments. Classification accuracies of the trained models in dependence on the number of training examples. Four panels correspond to four models with different expressive power (see text for explanations).}
\end{figure*}

We performed the following experiments:
\begin{itemize}
    \item[--] {\bf Discriminative} learning. There is no communication between local models, i.e.~they learn independently of each other. Each one learns its own classifier on its own training set. The size of the training sets is growing from 100 to 5000 examples per model, and the datasets for different models do not overlap. We consider this experiment as a baseline. The following experiments aim at improving it either by introducing inter-model communication or learning generatively.
    \item[--] {\bf Distillation}. In addition to its own labeled dataset $(x, c)$, each local model uses labels provided by other models, i.e.~pairs $(x, c')$. Note that this requires no data transfer between the models. It is sufficient if each local model can request inference of other ones on its own dataset.
    \item[--] {\bf Symmetric} learning. There is again no communication between the models. But in contrast to the first experiment, each local model learns a generative model $p(x,c)$. We again use Symmetric Equilibrium Learning \cite{pmlr-v238-flach24a} for this.
    \item[--] {\bf Cooperative} learning. Game-based distributed learning as proposed. Models are learned generatively as in the previous experiment, but in addition to the own dataset $(x,c)$, each local model uses samples $(x',c')$, generated by other ones. Note again that this requires no data transfer between models. They should only be able to call a generating function of the others.
    \item[--] {\bf Optimal}. Classifiers are learned discriminatively on the whole dataset of 60k training images. We consider this experiment as a baseline we would like to reach, i.e.~as a ``gold standard'' which can be reached in principle by classifiers with the used architecture.
\end{itemize}

The results are shown in Fig.~\ref{fig:fmnist}, where the sub-figures correspond to the different local models. They show their validation accuracy as a function of the size of their own training sets. Below are some observations.
\begin{itemize}
    \item[--] All accuracies nicely grow with the size of the training sets.
    \item[--] More powerful models give better results for larger data sets. At the same time, they are slightly more over-fitting, i.e.~the results for small data sets are bad.
    \item[--] Distillation considerably improves the baseline.
    \item[--] Generative learning greatly improves the baseline and is superior to the distillation in all experiments.
    \item[--] Results of all experiments converge to each other with growing data sets. 
    \item[--] The proposed game-based distributed learning approach gives the best results in all experiments.
\end{itemize}

\subsection{PolyMNIST}



\begin{figure}[ht]
    \begin{center}
    \includegraphics[width=\linewidth]{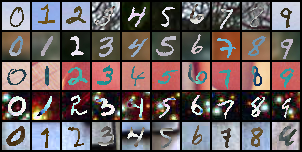}

    \VS
    \includegraphics[width=\linewidth]{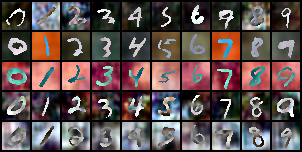}

    \VS\VS
    \includegraphics[width=\linewidth]{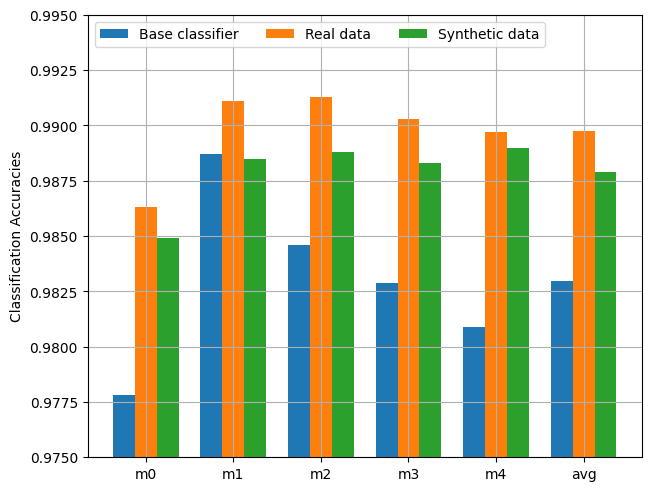}
    \end{center}
    \caption{\label{fig:polymnist}PolyMNIST experiments. Top: original images, middle: generated images, rows and columns correspond to styles and digits, respectively. Bottom: classification accuracies per style for the considered models, ``avg'' -- averaged over styles.}
\end{figure}

Our last experiment demonstrates that the proposed approach is suitable for models that can be trained by semi-supervised learning. This allows to use local models that are defined on different probability spaces. For this experiment, we use the PolyMNIST dataset \cite{DBLP:conf/iclr/SutterDV21}. It is obtained from the original MNIST dataset by coloring MNIST images with different styles (see Fig.~\ref{fig:polymnist} for a sample). The data consist of digits $c\in\{0\ldots 9\}$, styles $m\in\{0 \ldots 4\}$, and RGB images $x\in \mathbb R^{3\times 28\times 28}$.  We notice that PolyMNIST images are produced by using binarized original MNIST images as foreground/background segmentation masks. Therefore, we extend our model by introducing an additional segmentation component $s\in\{0,1\}^{28\times 28}$. Finally, we introduce binary latent variables $z\in\{0,1\}^{16}$. We refer to the Appendix for a detailed definition of the model and its semi-supervised learning. In summary, the model is given by a joint probability distribution $p(c,m,z,s,x)$ over all variables described above. It can be trained by semi-supervised learning, where the training samples are either pairs $(c,s)$ or triplets $(c,m,x)$. The former are given by binarized MNIST images, and the latter are given by PolyMNIST data. The trained model can be used for different inference tasks, in particular, image generation and digit classification.

For the purpose of our experiment, we assume that the MNIST data are private, i.e.~the training pairs $(c,s)$ are not directly accessible. In order to resolve this restriction, we follow our approach and introduce a lightweight generative model $p_{cs}(c, s)$, which learns on binarized MNIST examples and is able to generate MNIST-like synthetic data. The model architecture is very similar to the one used in our first experiment.

We compare two variants. First, when samples $(c,s)$ are given by the MNIST dataset (called {\em real data}), and second, when synthetic samples $(c,s)$ are generated by $p_{cs}(c, s)$. In addition, in order to have a baseline, we learn a simple classifier $p(c\cond x)$ on PolyMNIST images. Its architecture is chosen similar to the corresponding part of the main model. The results are presented in Fig.~\ref{fig:polymnist}. The generated images (in the middle) were produced by sampling from $p(x,z\cond c, m, s)$ with $(c,s)\sim p_{cs}(c,s)$ sampled from the lightweight model, and with randomly chosen style $m$. The bottom panel of the figure shows the achieved digit classification accuracies per style for the considered models. First of all, we observe that the generative model learned with real MNIST data $(c,s)$ outperforms the baseline classifier by a considerable margin\footnote{Note that for such simple data the improvement in a fraction of percent is already essential.}. The improvement is clearly visible especially for the ``hard'' textured styles (like e.g.~``m0'' or ``m4''), where even human observers do not always recognize the presented digit. As expected, learning with synthetic data is slightly worse. Nevertheless, it considerably outperforms the baseline classifier for almost all styles (except the simplest non-textured one, where the baseline classifier already performs well) and on average.






\section{Limitations and extensions}\label{sec:ext}

In this work, we restricted ourselves to the case of strictly positive $\alpha_{ii}$ (see Sec.~\ref{sec:game}) and the discussion of the algorithm’s convergence for this case. However, it also makes perfect sense to consider situations when some of the local models have no private datasets. For instance, if the task is to learn a balanced mixture from all private data (like e.g.~in standard FL with a single global model), this can be achieved by introducing an additional local model without its own data, which learns solely on the synthetic data generated by the other models. This would require extending the convergence analysis to such cases. The same holds for models employing semi-supervised learning (see \eqref{eq:semisup}).

As discussed in Sec.~\ref{sec:game} and experimentally investigated in Sec.~\ref{sec:mnist}, the choice of the $\alpha$-matrix is crucial for both the resulting models and the convergence speed. In practice, the requirements can contradict, for example, if the locally learned probability distributions should be as similar as possible, keeping at the same time fast convergence. This trade-off can be approached by allowing dynamic $\alpha$-matrices. We could \eg start learning with large $\alpha_{ii}$ values to enforce convergence, and then fine-tune with small $\alpha_{ii}$ values to achieve the necessary properties of the results. Again, this would require extending the convergence analysis to such cases with non-stationary $\alpha$-matrices.

Our work assumes generative models. At first glance, this may appear to be a restriction (models should actually be able to generate data). In practice, however, this can be often relaxed to local models that only appear to be generative from outside. For instance, depending on the particular application and privacy restrictions, a local model may actually be allowed to share some parts of its data. Consider \eg a local model over pairs $(x,y)$, with the primary goal to predict labels $y$ from observations $x$. Its local dataset consists of pairs $(x^\ast,y^\ast)$. Assuming that the private observations $x^\ast$ can actually be shared, the model can generate ``synthetic'' data $(x^\ast, y')$ with $y'\sim p(y\cond x^\ast)$ sampled from the learned classifier. Hence, no generative model is needed in this case. Quite often, private data can be shared but should be anonymized, i.e.~some data parts have to remain hidden. Note that the other local models ``do not need to know'', whether the observed data are real or synthetic, or whether some parts are actually hidden. Such scenarios fit perfectly into the proposed game-theoretic approach.

\section{Conclusion}

We propose to treat distributed learning in terms of game theory, i.e.~we view it as a cooperative game. We consider local models as players, each having its strategy represented by its learnable parameters. Each player optimizes its own utility. This representation completely resolves the model heterogeneity issue -- local models are considered as black boxes, being able (i) to learn their parameters from data, and (ii) to generate synthetic data. For the latter, we assume generative local models. We prove the existence of a unique Nash equilibrium for exponential family local models as well as the convergence of the proposed learning approach to this equilibrium. We experimentally show that our approach can effectively mix local empirical data distributions given by local training sets without transferring real data between local models. Moreover, our approach allows learning heterogeneous local models, even if they are defined on different probability spaces. Given the promising results, we see numerous directions for future work, some of which are discussed in Sec.~\ref{sec:ext}.

\bibliographystyle{IEEEtran}
\bibliography{dpplgm}

\begin{thebibliography}{10}
\providecommand{\url}[1]{#1}
\csname url@samestyle\endcsname
\providecommand{\newblock}{\relax}
\providecommand{\bibinfo}[2]{#2}
\providecommand{\BIBentrySTDinterwordspacing}{\spaceskip=0pt\relax}
\providecommand{\BIBentryALTinterwordstretchfactor}{4}
\providecommand{\BIBentryALTinterwordspacing}{\spaceskip=\fontdimen2\font plus
\BIBentryALTinterwordstretchfactor\fontdimen3\font minus \fontdimen4\font\relax}
\providecommand{\BIBforeignlanguage}[2]{{%
\expandafter\ifx\csname l@#1\endcsname\relax
\typeout{** WARNING: IEEEtran.bst: No hyphenation pattern has been}%
\typeout{** loaded for the language `#1'. Using the pattern for}%
\typeout{** the default language instead.}%
\else
\language=\csname l@#1\endcsname
\fi
#2}}
\providecommand{\BIBdecl}{\relax}
\BIBdecl

\bibitem{pmlr-v54-mcmahan17a}
B.~McMahan, E.~Moore, D.~Ramage, S.~Hampson, and B.~A.~y. Arcas, ``{Communication-Efficient Learning of Deep Networks from Decentralized Data},'' in \emph{Proceedings of the 20th International Conference on Artificial Intelligence and Statistics}, ser. Proceedings of Machine Learning Research, A.~Singh and J.~Zhu, Eds., vol.~54.\hskip 1em plus 0.5em minus 0.4em\relax PMLR, 20--22 Apr 2017, pp. 1273--1282.

\bibitem{DBLP:conf/nips/0001MO20}
A.~Fallah, A.~Mokhtari, and A.~E. Ozdaglar, ``Personalized federated learning with theoretical guarantees: {A} model-agnostic meta-learning approach,'' in \emph{Advances in Neural Information Processing Systems 33: Annual Conference on Neural Information Processing Systems 2020, NeurIPS 2020, December 6-12, 2020, virtual}, H.~Larochelle, M.~Ranzato, R.~Hadsell, M.~Balcan, and H.~Lin, Eds., 2020.

\bibitem{DBLP:conf/iclr/Diao0T21}
E.~Diao, J.~Ding, and V.~Tarokh, ``Heterofl: Computation and communication efficient federated learning for heterogeneous clients,'' in \emph{9th International Conference on Learning Representations, {ICLR} 2021, Virtual Event, Austria, May 3-7, 2021}.\hskip 1em plus 0.5em minus 0.4em\relax OpenReview.net, 2021.

\bibitem{10.1145/3534678.3539384}
J.~Chen and A.~Zhang, ``Fedmsplit: Correlation-adaptive federated multi-task learning across multimodal split networks,'' in \emph{Proceedings of the 28th ACM SIGKDD Conference on Knowledge Discovery and Data Mining}, ser. KDD '22.\hskip 1em plus 0.5em minus 0.4em\relax New York, NY, USA: Association for Computing Machinery, 2022, p. 87–96.

\bibitem{DBLP:conf/icml/ChenZ24}
------, ``Fedmbridge: Bridgeable multimodal federated learning,'' in \emph{Forty-first International Conference on Machine Learning, {ICML} 2024, Vienna, Austria, July 21-27, 2024}.\hskip 1em plus 0.5em minus 0.4em\relax OpenReview.net, 2024.

\bibitem{pmlr-v139-shamsian21a}
A.~Shamsian, A.~Navon, E.~Fetaya, and G.~Chechik, ``Personalized federated learning using hypernetworks,'' in \emph{Proceedings of the 38th International Conference on Machine Learning}, ser. Proceedings of Machine Learning Research, M.~Meila and T.~Zhang, Eds., vol. 139.\hskip 1em plus 0.5em minus 0.4em\relax PMLR, 18--24 Jul 2021, pp. 9489--9502.

\bibitem{DBLP:conf/icml/HuangSZL24}
C.~Huang, K.~Srinivas, X.~Zhang, and X.~Li, ``Overcoming data and model heterogeneities in decentralized federated learning via synthetic anchors,'' in \emph{Forty-first International Conference on Machine Learning, {ICML} 2024, Vienna, Austria, July 21-27, 2024}.\hskip 1em plus 0.5em minus 0.4em\relax OpenReview.net, 2024.

\bibitem{s23156986}
L.~Che, J.~Wang, Y.~Zhou, and F.~Ma, ``Multimodal federated learning: A survey,'' \emph{Sensors}, vol.~23, no.~15, 2023.

\bibitem{Vahdat2020NeurIPS}
A.~Vahdat and J.~Kautz, ``{NVAE}: A deep hierarchical variational autoencoder,'' in \emph{NeurIPS}, vol.~33, 2020, pp. 19\,667--19\,679.

\bibitem{DBLP:conf/iclr/SutterDV21}
T.~M. Sutter, I.~Daunhawer, and J.~E. Vogt, ``Generalized multimodal {ELBO},'' in \emph{9th International Conference on Learning Representations, {ICLR} 2021, Virtual Event, Austria, May 3-7, 2021}.\hskip 1em plus 0.5em minus 0.4em\relax OpenReview.net, 2021.

\bibitem{Rombach:CVPR2022}
R.~Rombach, A.~Blattmann, D.~Lorenz, P.~Esser, and B.~Ommer, ``High-resolution image synthesis with latent diffusion models,'' in \emph{CVPR}, 2022, pp. 10\,684--10\,695.

\bibitem{Radford:Arxiv2021}
\BIBentryALTinterwordspacing
A.~Radford, J.~W. Kim, C.~Hallacy, A.~Ramesh, G.~Goh, S.~Agarwal, G.~Sastry, A.~Askell, P.~Mishkin, J.~Clark, G.~Krueger, and I.~Sutskever, ``Learning transferable visual models from natural language supervision,'' 2021. [Online]. Available: \url{https://arxiv.org/abs/2103.00020}
\BIBentrySTDinterwordspacing

\bibitem{Rosen:Econ1965}
J.~B. Rosen, ``Existence and uniqueness of equilibrium points for concave n-person games,'' \emph{Econometrica}, vol.~33, no.~3, p. 520, 1965.

\bibitem{pmlr-v238-flach24a}
B.~Flach, D.~Schlesinger, and A.~Shekhovtsov, ``Symmetric equilibrium learning of {VAE}s,'' in \emph{Proceedings of The 27th International Conference on Artificial Intelligence and Statistics}, ser. Proceedings of Machine Learning Research, S.~Dasgupta, S.~Mandt, and Y.~Li, Eds., vol. 238.\hskip 1em plus 0.5em minus 0.4em\relax PMLR, 02--04 May 2024, pp. 3214--3222.

\bibitem{flicsgithub}
D.~Schlesinger and B.~Flach, ``Game-theoretic distributed learning of generative models for heterogeneous data collections (source code),'' \url{https://github.com/dschles70/flics-2025}, October 2025.

\bibitem{Seitzer2020FID}
M.~Seitzer, ``{pytorch-fid: FID Score for PyTorch},'' \url{https://github.com/mseitzer/pytorch-fid}, August 2020, version 0.2.1.

\bibitem{Besag:1975}
J.~Besag, ``Statistical analysis of non-lattice data,'' \emph{Journal of the Royal Statistical Society. Series D (The Statistician)}, vol.~24, no.~3, pp. pp. 179--195, 1975.

\bibitem{Shekhovtsov:ICLR2022}
A.~Shekhovtsov, D.~Schlesinger, and B.~Flach, ``{VAE} approximation error: {ELBO} and exponential families,'' in \emph{The Tenth International Conference on Learning Representations, {ICLR} 2022, Virtual Event, April 25-29, 2022}.\hskip 1em plus 0.5em minus 0.4em\relax OpenReview.net, 2022.

\end{thebibliography}


\appendix

\subsection{Network architectures}

All conditional distributions used in our experiments are implemented as feed-forward networks. We use the following network architectures:

\begin{itemize}
    \item[--] {\bf Encoders} are networks of spatially decreasing resolution, which have 2D inputs (e.g.~images) and 1D outputs. An example of such a network is the classifier $p(c\cond x)$ used in the Fashion MNIST experiments. Our encoders have 6 convolutional layers, some of which with strides larger than one, to effectively reduce the spatial resolution. The number of channels increases with decreasing spatial resolution, typically starting from 16 after the first convolution.
    \item[--] {\bf Decoders} have a reverse architecture, i.e.~they are used to produce 2D outputs from 1D inputs. They have the same number of layers and channels as the corresponding encoders, but are implemented by using transposed convolutions. An example of such a decoder is $p(x\cond z_0,z_1)$ used in the MNIST experiments (1D inputs $z_0$ and $z_1$ are concatenated).
    \item[--] {\bf MLP}s typically consist of 3 fully connected layers. The number of units in the hidden layers is usually twice as large as the dimension of the input or of the output. An example is $p(z_1\cond z_0)$ in the MNIST experiments.
\end{itemize}
We use $\tanh$ activations in all our networks and optimize them using the Adam optimizer with gradient step size $1e{-}4$. 

Regarding the training times: The most time-consuming experiment is the training of the PolyMNIST model with synthetic data, i.e.~learning of the ``main model'' using samples from the ``lightweight model'', which takes about 38 hours on a single NVIDIA H100 GPU. Learning the Fashion MNIST model takes about 24 hours. Note, however, that in this experiment, 4 models are learned simultaneously on a single GPU. MNIST experiments take about 15 hours per run (2 models learned in each run).

\subsection{Latent variable models}

Next, we explain in more detail how we deal with latent variables on the example of $p(x)$ used in the MNIST experiments. Remember that we use a HVAE, where $p(z_0,z_1,x) = p(z_0)p(z_1\cond z_0)p(x\cond z_0, z_1)$ is a hierarchical decoder, and $q(z_0,z_1,x)=q(x)q(z_1\cond x)q(z_0 \cond z_1, x)$ is a hierarchical encoder with reverse factorization order. The distribution $p(z_0)$ is assumed uniform and $q(x)$ is the data distribution. All other components are implemented as feed-forward networks as described above. We use Symmetric Equilibrium Learning \cite{pmlr-v238-flach24a}, which consists in this case of the following steps (given a training example $x$):
\begin{enumerate}
    \item Sample from the encoder, i.e.~$z_1\sim q(z_1\cond x)$ followed by $z_0\sim q(z_0\cond z_1, x)$, optimize the decoder's likelihoods $\log p(z_1 \cond z_0)$ and $\log p(x\cond z_0, z_1)$.
    \item Sample from the decoder, i.e.~$z_0\sim U$, $z_1\sim p(z_1\cond z_0)$, $x\sim p(x\cond z_0, z_1)$, optimize the encoder's likelihoods $\log q(z_1 \cond x)$ and $\log q(z_0\cond z_1, x)$.
\end{enumerate}

In the Fashion MNIST experiments, we define $p(x\cond c)$ as a {\em conditional} HVAE with two groups of latent variables. The only difference to the MNIST case described above is that all constituents of both the encoder and decoder are additionally conditioned on $c$. The latent variables in the PolyMNIST experiments are handled in a somewhat different way (see below).


\subsection{PolyMNIST model}

The main model in the PolyMNIST experiments is a joint probability distribution over the following random variables: digits $c\in\{0\ldots 9\}$, styles $m\in\{0 \ldots 4\}$, RGB images $x\in \mathbb R^{3\times 28\times 28}$, binary segmentations $s\in\{0,1\}^{28\times 28}$, and binary latent variables $z\in\{0,1\}^{16}$. We model the joint probability distribution $p(c,m,z,s,x)$ by means of the conditional probability distributions of its components conditioned on the rest, e.g.~$p(x\cond c,m,z,s)$, $p(c\cond m,z,s,x)$, etc. This choice is motivated by the following considerations. We want to address semi-supervised learning in an EM-like manner. This means that we need to complete partial observations, say $(c,m,x)$ (the PolyMNIST data), by sampling the remaining variables, in this case $(z',s')\sim p(z,s\cond c,m,x)$. The joint log-likelihood of the completed sample $\log p(c,m,z',s',x)$ is then optimized as if it were a completely supervised training example. Other combinations of observables would then require other ``auxiliary'' conditional probability distributions for their completion. The proposed representation of the joint distribution by means of its conditionals allows us to implement all conditional probability distributions that are required for the completion of partial observations. Sampling from any conditional probability distribution can be performed by Gibbs sampling with clamped observed parts. The same procedure can be also used for inference, for example, for predicting digits $c$ from images $x$. In this case, we employ the maximum marginal decision strategy, i.e.~we need to compute $\arg\max_c p(c\cond x)$ (i.e.~marginalized over all other variables). We run Gibbs sampling with clamped $x$ (i.e.~iterating over all conditionals except $p(x\cond\cdot)$), accumulate digit frequencies, and decide at the end for the most frequent one.

Learning the conditional distribution models from completed examples corresponds to pseudo-likelihood learning \cite{Besag:1975}. We note, however, that completing partial observations by Gibbs sampling is valid only if all conditional probability distributions are consistent \cite{Shekhovtsov:ICLR2022}. Strictly speaking, this consistency cannot be guaranteed because the conditional distributions are modeled independently of each other and learned from data. On the other hand, the training data in our case are abundant (60k segmentations in MNIST and 300k images in PolyMNIST). Assuming completely supervised learning, i.e.~complete training samples $(c,m,z,s,x)$, and powerful enough models, pseudo-likelihood learning should recover the true conditionals, which are then obviously consistent.

To summarize, our learning algorithm optimizes the pseudo-likelihood, where incomplete training examples are completed by Gibbs sampling. In particular, we have two types of training samples: $(c,s)$ (given in MNIST) and $(c,m,x)$ (given in PolyMNIST). The former are completed by iterating sampling from $p(m\cond \cdot)$, $p(z\cond \cdot)$, and $p(x\cond \cdot)$, starting from random $m$, $z$, and $x$. For the latter, analogously, we iterate sampling from $p(z\cond \cdot)$ and $p(s\cond \cdot)$. Note that we have no training examples where both image and segmentation are given simultaneously.

The feed-forward architectures used in this experiment slightly differ from the ones described above, since here we often need to mix 1D and 2D inputs. The networks are either directly composed of our standard ``building blocks'' or correspondingly adapted. For example, the conditional distribution for segmentations, i.e.~$p(s\cond \cdot)$, is implemented as a UNet-like architecture, where the encoder- and decoder-branches have the same architecture as our encoders and decoders in the previous experiments. Further details can be found in the source code \cite{flicsgithub}.

\end{document}